\newtheorem{theorem}{Theorem}
\newtheorem{definition}{Definition}
\def\BState{\State\hskip-\ALG@thistlm}
\tikzstyle{block} = [draw,fill=blue!20,minimum size=2em]
\tikzstyle{branch}=[fill,shape=circle,minimum size=3pt,inner sep=0pt]
\definecolor{xuxi}{rgb}{0.59, 0.0, 0.09}
\definecolor{josh}{rgb}{0.0, 0.42, 0.24}
\definecolor{peng}{rgb}{0.0, 0.0, 1.0} 
\definecolor{new}{rgb}{1.0, 0.49, 0.0}
\definecolor{old}{rgb}{0.25, 0.25, 0.25}
\definecolor{proposed}{rgb}{1.0, 0.49, 0.0}
\definecolor{accepted}{rgb}{0.0, 0.0, 0.0}
\newcommand{\exact}{\texttt{Exact} }
\newcommand{\memless}{\texttt{Memoryless} }
\title{Explainable Deterministic MDPs}
\author{
  Joshua R. Bertram    \quad    Peng Wei \\
  Iowa State University\\
  Ames, IA 50011 \\
  \texttt{\{bertram1, pwei\}@iastate.edu} \\
}
\begin{document}

\maketitle

\begin{abstract}
    We present a method for a certain class of Markov Decision Processes (MDPs) that can relate the optimal policy back to one or more reward sources in the environment.   For a given initial state, without fully computing the value function, q-value function, or the optimal policy the algorithm can determine which rewards will and will not be collected, whether a given reward will be collected only once or continuously, and which local maximum within the value function the initial state will ultimately lead to.  We demonstrate that the method can be used to map the state space to identify regions that are dominated by one reward source and can fully analyze the state space to explain all actions.  We provide a mathematical framework to show how all of this is possible without first computing the optimal policy or value function.
\end{abstract}

\section{Introduction}
	Markov Decision Processes (MDPs) are a framework for making decisions with broad applications to financial, robotics, operations research and many other domains.  If a problem can be formulated as an MDP, various algorithms can be used to solve the MDP resulting in what is known as an optimal policy.  An MDP is explainable in the sense that the optimal policy selects an action from a given state that will lead to the highest future expected reward.  This understanding is based on Bellman's well known dynamic programming work from \cite{bellman1957dynamic}.  However, beyond that general explanation why the MDP solution chooses a specific action at a specific time is opaque.  
    
    MDPs are formulated as the tuple $S, A, R, T $ where $S$ is the state at a given time $t$, $A$ is the action taken by the agent at time $t$ as a result of the decision process, $R$ is the reward received by the agent as a result of taking the action, and $T(s, a, s')$ is a transition function that describes the dynamics of the environment and capture the probability $p( s' | s, a )$ of transitioning to a state $s'$ given the action $a$ taken from state $s$.  
    
    An MDP is said to be deterministic if there is no uncertainty or randomness on the transition between $s$ and $s'$ given $a$; that is, $p( s' | s, a ) = 1.0$.  The output of an MDP is termed a policy, $\pi$, which describes an action $a$ that should be taken at every state $s \in S$.  When an MDP is solved completely such that the policy is optimal, it is typically denoted as $\pi^*$.  The optimal policy has the property that it maximizes the expected cumulative reward from any initial starting state.  Alternatively, the MDP solution can also be viewed as a value function that describes the value of being at each state, or also as a $Q$-value function that describes the value of taking a specific action from a given state.  Given one representation, it is possible to recover the other representations.  We use the notation of $V$ for the value function and $V^*$ for the optimal value function.  MDPs which contain states which "terminate", meaning that once the state is reached, no further actions are taken.  In chess, for example, a terminating state would be checkmate.  In other problems there may be no natural terminating state.  Such problems are said to be continuous.  The reward function $R$ defines the reward that the agent receives for taking action $a$ from state $s$.  Reward functions can be based off only the state, $R(s)$, off the state and action, $R(s, a)$, and occasionally off the resulting next state, $R(s, a, s')$.  
    
    There are many well known methods for solving MDPs exactly including value iteration and policy iteration, which are iterative methods based on the dynamic programming approach proposed by Bellman \cite{bellman1957dynamic}.  These algorithms use a table-based approach to represent the state-action space exactly and iteratively converge to the optimal policy $\pi^*$ and corresponding value function $V^*$.  These table-based methods have a well known disadvantage that they quickly become intractable.  As the number of states and actions increases in number or dimension, the number of entries in the (multi-dimensional) table increases exponentially.  Many real-world problems quickly exhaust the resources of even high performing computers.
    
    This curse of dimensionality is typically overcome by resorting to various forms of approximation of the optimal value function or optimal policy, some of which also have convergence guarantees or bounds on the error.  Other techniques have focused on managing the size of the state space explosion through factorization or through aggregation and tiling.
    
    In \cite{bertramExact} and \cite{bertramMemoryless} Bertram proposes two algorithms to solve MDPs named \exact and \memless that treat an MDP as a graph and use the connectivity of the graph and the distance between nodes in the graph to solve an MDP very efficiently.  Bertram \exact has $O( |R|^2 \times |A|^2 \times |S|)$ time complexity and $O( |S| + |R| \times |A| )$ memory complexity.  Bertram \memless has $O( |R|^3 \times |A|^2)$ time complexity and $O( |R| \times |A| )$ memory complexity and has no dependency on the state space.  Where \exact generates a full table-based value function equivalent to value iteration, \memless uses a novel way to represent the value function as a list of the rewards in the order that they are processed, and can construct any part of the value function on-demand from this list.  Both \exact \cite{bertramExact} and \memless \cite{bertramMemoryless}  operate on a restricted class of MDPs:  deterministic, continuous MDPs with positive real rewards (based only on state and not on action) and require a fully-connected environment where it is possible to transition from any state to any other state in the space.  Both papers use a grid world as an example environment.  
    
    In this paper, we examine how the method can be extended to:
\begin{enumerate}
\item determine which rewards will and will not be collected
\item whether a given reward will be collected only once or continuously
\item which local maximum within the value function the initial state will ultimately lead to
\end{enumerate}

We also show how to create a map of the state space to identify regions that are dominated by one reward source and can fully analyze the state space to explain all actions.    We provide a mathematical framework to underpin the claims in this paper.

    Despite the limited class of MDPs which can be solved with this method, the method leads to interesting results.  If the method can be generalized to a broader class of MDPs, it can perhaps be more broadly applied.  
    
\section{Related Work}

Researchers in many fields have long sought interpretable models that humans can understand.  For example, in 1976 \cite{shortliffe2012computer} describes expert systems that provide explanations on medical diagnoses.  Examples of the more recent use of the term explainability are \cite{van2004explainable} and \cite{gunning2017explainable}.

The solutions to Markov Decision Processes are commonly understood by many sources \cite{yellowbook,suttonbarto,bertsekas1995dynamic,powell2007approximate,bellman1957dynamic}, as maximizing the expected (or future) reward.  Factored MDPs \cite{schuurmans2002direct,guestrin2003efficient} can be viewed as an attempt to explain a large MDP by dividing it into multiple smaller MDPs, which on balance yields a potentially more understandable MDP.  In \cite{hauskrecht1998hierarchical} a similar result is achieved with hierarchical MDPs which operate over subsets of the state space.  In contrast,  \cite{lakkaraju2017learning} uses an MDP to learn an interpretable set of decisions, which makes the output of the MDP much more understandable without clarifying the internals of the MDP itself.

To the authors knowledge this is the first work that is able to trace the policy directly to the rewards in this fashion.

\section{Review of Key Concepts}

See \cite{bertramExact} and \cite{bertramMemoryless} for more detail, but in summary the key insights were to describe an MDP in terms of a graph, to take advantage of known structure of the MDP, and to utilize discoveries on how the expected reward from multiple reward sources interplay and result in the value function.  \memless extends \exact by using a computational trick to eliminate the need to represent the value function explicitly and instead represents the value function as a list of peaks that are formed by the reward sources, and then demonstrates how to recover the value function from this list of peaks.

We repeat some key concepts from \cite{bertramExact} which will be extended; please refer to the paper for the formal mathematical definitions.   Rewards are collected either once or continuously (infinitely) under a given policy.  Peaks form in the value function where rewards occur in the state space.  Baseline peaks, denoted $\mathcal{B}$, are formed when a single reward source is collected continuously.  Combined peaks, denoted $\Gamma$, are formed when two or more reward sources are collected together continuously.  Delta peaks, denoted $\Delta$, are formed when a single reward source is collected only once along the way to either a baseline or combined peak.  Where baseline or combined peaks form, the repeating path that is followed after reaching the goal state for the first time is termed the minimum cycle and represents the path that is followed once the peak is reached.

\begin{figure}[H]
\centering
\begin{tikzpicture}[decoration={markings, mark=at position 1.0 with {\arrow{>}};}]
      \tkzDefPoint(2,.05){A}
      \tkzDefPoint(2,.3){B}
      \tkzDrawLines[postaction={decorate}](A,B)
      \tkzLabelPoint[above right](B){$r_d$}
      \tkzLabelPoint[below](A){$s_d$}
      
      \tkzDefPoint(4,.00){A}
      \tkzDefPoint(4,1.3){B}
      \tkzDrawSegments[postaction={decorate}](A,B)
      \tkzLabelPoint[above right](B){$r_i$}
      \tkzLabelPoint[below](A){$s_i$}
      
      \tkzDefPoint(7,.00){A}
      \tkzDefPoint(7,1.0){B}
      \tkzDrawSegments[postaction={decorate}](A,B)
      \tkzLabelPoint[above right](B){$r_p$}
      \tkzLabelPoint[below](A){$s_p$}
      
      \tkzDefPoint(8,.00){A}
      \tkzDefPoint(8,0.7){B}
      \tkzDrawSegments[postaction={decorate}](A,B)
      \tkzLabelPoint[above right](B){$r_s$}
      \tkzLabelPoint[below](A){$s_s$}

      \draw[->] (0 ,0) -- (10,0) node[right] {$state$};
      \draw[->] (0 ,0) -- (0,3) node[above] {$value$};
      \draw[thick,domain=0:10,samples=1000,variable=\x,blue] plot ({\x},{3*0.9^(abs(\x-4))});
      \draw[thick,domain=0:10,samples=1000,variable=\x,red] plot ({\x},{1.5*0.9^(abs(\x-7))+1*0.9^(abs(\x-8))});
      \draw[thick,domain=0:10,samples=1000,variable=\x,green] plot ({\x},{2.8*0.9^(abs(\x-2))});
      \draw[thick,densely dotted,domain=2.45:2.8,smooth,variable=\y,black]  plot (2,{\y});
      \tkzLabelPoint[above right](3.6,3  ){$\mathcal{B}^i$}
      \tkzLabelPoint[above right](6.6,2.5){$\Gamma^{p,s}$}
      \tkzLabelPoint[above right](1.6,2.8){$\Delta^d$}
      
\end{tikzpicture}
\caption{Illustration of baseline (blue), combined baseline (red), and delta baseline (green).  Courtesy of \cite{bertramExact}.}
\end{figure}
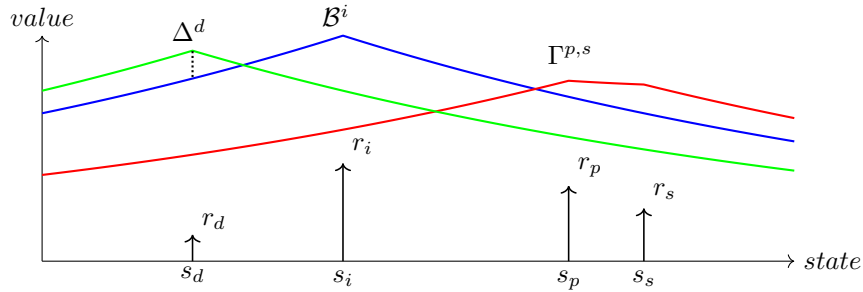

    In the proof of Bertram \exact value functions which are composed of one or more of these baseline functions are termed $V^{\mathcal{M}}$ and the set of all value functions is between the zero-function $V_{\emptyset}(s) = 0$ is termed $V^\alpha$.  It was proven that the optimal value function $V^*$ lies both within $V^\mathcal{M}$ and $V^\pi$ and is the maximum of $V^\mathcal{M}$, $V^\pi$, and $V^\alpha$.  The computational trick used in Bertram \memless is the realization that the value function $V^\mathcal{M}$ can be constructed from a given set of baseline functions $M \in \mathcal{M}$, which was used in \cite{bertramMemoryless} to eliminate the need for storing the value function as a table in memory.

\newcommand{\DrawBean}[6]{
 \tkzDefPoint(#1+#2*(0.4   ) ,#3*0.8   ){A}
 \tkzDefPoint(#1+#2*(1     ) ,#3*0.4   ){B}
 \tkzDefPoint(#1+#2*(1.8   ) ,#3*1.0   ){C}
 \tkzDefPoint(#1+#2*(2.3773) ,#3*1.4421){D}
 \tkzDefPoint(#1+#2*(2.6905) ,#3*2.1074){E}
 \tkzDefPoint(#1+#2*(2.3752) ,#3*2.8828){F}
 \tkzDefPoint(#1+#2*(1.4   ) ,#3*3.0   ){G} 
 \tkzDefPoint(#1+#2*(0.6   ) ,#3*2     ){H} 
 
 \tkzDefPoint(#1+#2*(1.0   ) ,#3*0.8   ){I} 
 \tkzDefPoint(#1+#2*(1.8   ) ,#3*1.6   ){J} 
 \tkzDefPoint(#1+#2*(1.2   ) ,#3*2.0   ){K} 
 \tkzDefPoint(#1+#2*(0.7   ) ,#3*1.3   ){L} 
 
 \tkzDefPoint(#1+#2*(1.4   ) ,#3*1.5   ){M} 
 \tkzDefPoint(#1+#2*(2.4   ) ,#3*2.1   ){N} 
 
 
 \tkzLabelPoint[above      ](I){#5}
 \tkzLabelPoint[above left](M){#6}
 \tkzLabelPoint[below left](N){#4}

 \node [color=red] at (M) {\textbullet};

 \draw plot [smooth cycle, tension=0.7] coordinates {
 (A) (B) (C) (D) (E) (F) (G) (H)
 };
 \draw plot [smooth cycle, tension=0.7] coordinates {
 (I) (J) (K) (L) 
 };
}

\newcommand{\DrawOther}[6]{
 \tkzDefPoint(#1+#2*(0.4   ) ,#3*0.8   ){A}
 \tkzDefPoint(#1+#2*(1     ) ,#3*0.4   ){B}
 \tkzDefPoint(#1+#2*(1.8   ) ,#3*1.0   ){C}
 \tkzDefPoint(#1+#2*(2.3773) ,#3*1.4421){D}
 \tkzDefPoint(#1+#2*(2.6905) ,#3*2.1074){E}
 \tkzDefPoint(#1+#2*(2.3752) ,#3*2.8828){F}
 \tkzDefPoint(#1+#2*(1.4   ) ,#3*3.0   ){G} 
 \tkzDefPoint(#1+#2*(0.6   ) ,#3*2     ){H} 
 
 \tkzDefPoint(#1+#2*(0.2   ) ,#3*0.8   ){I} 
 \tkzDefPoint(#1+#2*(1.0   ) ,#3*1.4   ){J} 
 \tkzDefPoint(#1+#2*(0.3   ) ,#3*2.5   ){K} 
 \tkzDefPoint(#1+#2*(-.4   ) ,#3*1.3   ){L} 
 
 \tkzDefPoint(#1+#2*(0.6   ) ,#3*1.4   ){M} 
 \tkzDefPoint(#1+#2*(2.4   ) ,#3*2.1   ){N} 
 
 \tkzDefPoint(#1+#2*(-.9   ) ,#3*0.8   ){O}
 \tkzDefPoint(#1+#2*(0.8   ) ,#3*0.0   ){P}
 \tkzDefPoint(#1+#2*(1.8   ) ,#3*0.2   ){Q}
 \tkzDefPoint(#1+#2*(2.4773) ,#3*1.0421){R}
 \tkzDefPoint(#1+#2*(2.9905) ,#3*2.1074){S}
 \tkzDefPoint(#1+#2*(2.8752) ,#3*3.0828){T}
 \tkzDefPoint(#1+#2*(1.4   ) ,#3*3.4   ){U} 
 \tkzDefPoint(#1+#2*(0.3   ) ,#3*3     ){V} 
 
 
 \tkzLabelPoint[below right](L){#5}
 \tkzLabelPoint[above right](M){#6}
 \tkzLabelPoint[below left](N){#4}
 \tkzLabelPoint[below ](U){$V^\alpha$}

 \node [color=red] at (M) {\textbullet};

 \draw plot [smooth cycle, tension=0.7] coordinates {
 (A) (B) (C) (D) (E) (F) (G) (H)
 };
 \draw plot [smooth cycle, tension=0.7] coordinates {
 (I) (J) (K) (L) 
 };
 \draw plot [smooth cycle, tension=0.7] coordinates {
 (O) (P) (Q) (R) (S) (T) (U) (V)
 };
}

\def\oa{1}
\def\ob{9}
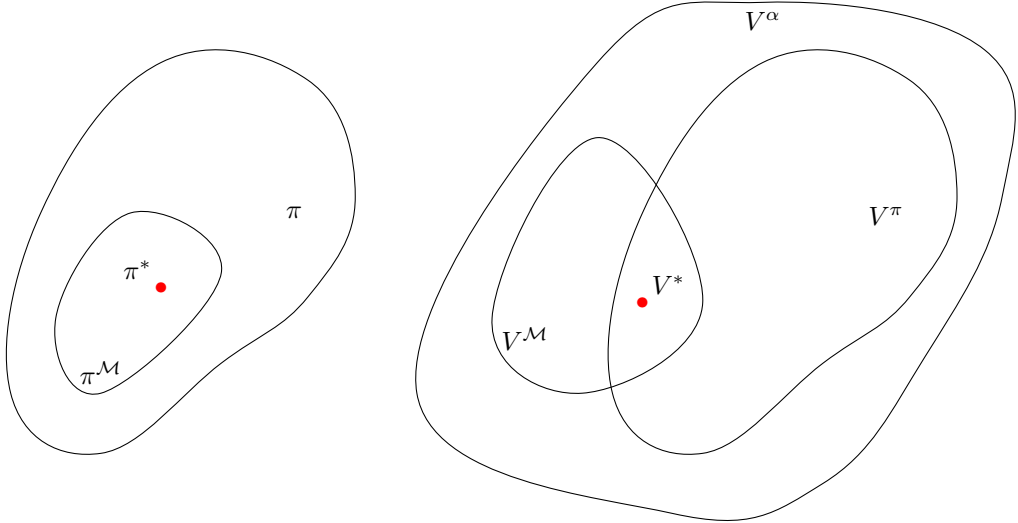
\begin{figure}[H]
\centering
\begin{tikzpicture}[label]
 \DrawBean{\oa}{2}{2}{$\pi$}{$\pi^{\mathcal{M}}$}{$\pi^*$}
 \DrawOther{\ob}{2}{2}{$V^\pi$}{$V^{\mathcal{M}}$}{$V^*$}
\end{tikzpicture}
\caption{Depiction of the relationship between policy, value function, and optimal solution for $V^{\mathcal{M}}$.  Courtesy of \cite{bertramExact}.}
\end{figure}

To construct the value function from the peaks, it was shown in \cite{bertramExact} that the value function $V^M$ is equal to the maximum value of each of the baselines at each state.

\begin{equation}
V^M(s) = \max_{M_i \in M} M_i(s), \forall s \in S
\end{equation}

We also bring the readers attention to the following theorem from \cite{bertramExact} (omitting the proof):

\begin{theorem}
All rewards $R = \{ r_1, r_2, \cdots, r_N \}$ are collected either once or infinitely under a given policy $\pi$.  That is, for a given reward $r_i \in R, N_k = \{ 1, \infty \}$, and only rewards falling within a minimum cycle of a local maximum in the value function are collected infinitely. 
\end{theorem}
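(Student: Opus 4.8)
The plan is to exploit the fact that, for a fixed deterministic stationary policy acting on deterministic transitions, the one-step dynamics collapse to a single successor map on the finite state space, so that every trajectory is eventually periodic. First I would define the successor function $f(s) = T(s, \pi(s))$; since the MDP is deterministic ($p(s' \mid s, a) = 1$) and $\pi$ assigns a unique action to each state, $f$ is a well-defined map $f : S \to S$. The trajectory generated from any initial state $s_0$ is then exactly the orbit $s_0, f(s_0), f^2(s_0), \dots$, which is an infinite sequence because the MDP is continuous and so has no terminating states.

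Next I would invoke the pigeonhole principle on the finite set $S$: the orbit must revisit some state, so there exist indices $j < k$ with $f^j(s_0) = f^k(s_0)$. The key combinatorial lemma is then that any state visited more than once is visited infinitely often, since $f^j(s_0) = f^k(s_0)$ implies $f^{j + m(k-j)}(s_0) = f^j(s_0)$ for every $m \ge 0$, so that state recurs forever. Conversely, a state lying strictly before the onset of the repeating block is visited exactly once, because a second visit would force it into the cycle. This splits the orbit into a finite transient tail, in which each state is visited once, followed by a cycle, in which each state is visited infinitely often; hence every visited state has a visit count in $\{1, \infty\}$.

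I would then transfer this dichotomy from states to rewards. Because the reward function depends only on the state, $R(s)$, the reward $r_i$ located at state $s_i$ is collected precisely when $s_i$ is visited, so $N_i \in \{1, \infty\}$ for every collected reward (with $N_i = 0$ when $s_i$ never appears on the orbit), which establishes the first assertion. In particular, the rewards collected infinitely often are exactly those whose states lie on the terminal cycle of the orbit.

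Finally I would identify this terminal cycle with the minimum cycle of a local maximum of the value function, and this is the step I expect to be the main obstacle, since it requires tying the purely dynamical cycle back to the analytic peak structure reviewed above rather than to a combinatorial fact. Here I would argue that the repeating block is, by definition, the minimum cycle, and that the value accumulated along it forms a baseline or combined peak $\mathcal{B}$ or $\Gamma$, each of which is a local maximum of the value function; a reward collected infinitely must therefore sit inside such a minimum cycle, whereas any reward on the transient tail contributes only a single discounted term and so cannot lie at a local maximum. The dichotomy of the previous step already rules out any reward being collected a finite number of times greater than once, so no separate argument is needed for that case.
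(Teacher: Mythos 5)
Note first that this paper never proves the statement you were given: it is imported verbatim from the prior work \cite{bertramExact} with the proof explicitly omitted, so there is no in-paper argument to compare yours against. Judged on its own merits, the core of your proposal is correct and cleanly executed. Collapsing the dynamics to a successor map $f:S\to S$, invoking the pigeonhole principle, and splitting every orbit into a finite transient tail (each state visited exactly once) followed by a terminal cycle (each state visited infinitely often) is a complete, elementary proof of the dichotomy clause; since rewards depend only on the state, the collection count of every reward lies in $\{0,1,\infty\}$, which is the honest version of the theorem's $N_k \in \{1,\infty\}$ claim. This part is in fact more general and more rigorous than anything the surrounding framework offers.

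The gap is exactly where you predicted it, and it is worse than a missing lemma: the local-maximum clause is not provable at the level of generality you (and the theorem statement) are working at, because it is false for a literally arbitrary deterministic policy. Concretely, take a grid world with a reward of $1$ at state $A$ and a reward of $100$ at a distant state $Z$, and a policy $\pi$ that loops forever around $A$ in a short cycle, but which, from a neighbor $X$ of $A$ that is off the cycle, routes the agent out to $Z$ (collecting the $100$) before rejoining the loop. The reward at $A$ is collected infinitely often under $\pi$, yet for $\gamma$ close to $1$ and $Z$ not too far away one gets $V^\pi(X) > V^\pi(A)$ (e.g.\ $\gamma = 0.99$, cycle length $4$, $Z$ fifty steps away gives $V^\pi(A) \approx 25.4$ but $V^\pi(X) > 60$), so the terminal cycle through $A$ is not a local maximum of $V^\pi$. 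Your sentence ``the value accumulated along it forms a baseline or combined peak, each of which is a local maximum'' silently assumes that the policy is greedy with respect to its own value function; that consistency is what rules out my counterexample, and it holds only for the optimal policy or for the restricted peak-built policies of the \texttt{Exact}/\texttt{Memoryless} framework. Closing the gap therefore requires importing that assumption explicitly and then arguing, as this paper does for its own Theorem~2, that under a greedy policy the value strictly increases along the trajectory until the cycle is reached, which is what forces the cycle to sit at a local maximum.
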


Thus, wherever a minimum cycle occurs, rewards within that minimum cycle are collected infinitely, and the origin of the minimum cycle is a local maximum within the value function.

\section{Methodology}

In this paper, we use this list of peaks and extend the mathematical analysis in \cite{bertramExact} to show that baseline functions are sufficient to determine how the rewards will be collected.  As in \cite{bertramExact,bertramMemoryless}, we restrict our analysis to positive real rewards.

\subsection{Dominance}

First we show the following, which intuitively is the natural result of viewing the optimal policy as a "hill climb" through the value function: 

\begin{theorem}
For a fully connected MDP, the optimal policy always leads to a local maximum from every initial state.
\end{theorem}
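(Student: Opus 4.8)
The plan is to treat the optimal policy as a deterministic self-map of a finite state space and to show that the orbit it generates must settle into a cycle located at a local maximum of $V^*$. First I would record the greedy characterization of $\pi^*$ that follows from Bellman optimality in the deterministic case: since $p(s'\mid s,a)=1$, the optimality equation collapses to $V^*(s) = R(s) + \gamma\, V^*\!\big(T(s,\pi^*(s))\big)$, so at each state $\pi^*$ selects the action whose unique successor carries the largest value. Following $\pi^*$ from an initial state $s_0$ therefore produces a trajectory $s_0, s_1, s_2, \dots$ with $s_{t+1} = T(s_t, \pi^*(s_t))$, which is precisely the ``hill climb'' through $V^*$ alluded to in the statement.

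Next I would invoke finiteness. Because $S$ is finite and $\pi^*$ is a stationary deterministic policy, the successor map is a fixed function $S \to S$, so by the pigeonhole principle the trajectory must eventually revisit a state and is thereafter ultimately periodic: it enters a cycle $C = (s_k, s_{k+1}, \dots, s_{k+p-1})$ that it traverses forever. This reduces the theorem to showing that $C$ is the minimum cycle of a local maximum of the value function.

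The crux of the argument, and the step I expect to be the main obstacle, is ruling out reward-free cycles so that the cited theorem can be applied. I would first argue that $C$ must contain a reward. Full connectivity together with strictly positive rewards makes $V^*(s) > 0$ at every state, since from any state some reward is reachable and, the MDP being continuous, contributes strictly positive discounted value. If no state of $C$ carried a reward, then $R(s)=0$ along $C$, and unrolling $V^*(s)=\gamma\, V^*(s_{\mathrm{next}})$ once around the period would give $V^*(s_k) = \gamma^{p} V^*(s_k)$, forcing $V^*(s_k)=0$ because $0<\gamma<1$ — a contradiction. Hence at least one reward lies on $C$, and since every state of $C$ is visited once per period, that reward is collected infinitely often.

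Finally I would apply the preceding theorem (that any reward collected infinitely often lies within a minimum cycle of a local maximum of $V^*$). Since the rewards on $C$ are collected infinitely, $C$ is exactly such a minimum cycle and its origin is a local maximum. Thus the optimal trajectory from the arbitrary initial state $s_0$ terminates in, and thereafter repeats, the minimum cycle of a local maximum, which is the claim. The one point requiring care is confirming that the cycle entered by the greedy ascent coincides with the minimum cycle guaranteed by the cited theorem rather than merely passing near it; this follows because the trajectory is eventually periodic with period exactly the length of $C$, so the limiting behavior of the policy \emph{is} traversal of that minimum cycle.
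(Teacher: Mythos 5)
Your proof is correct, but it takes a genuinely different route from the paper's. The paper argues via the ``hill climb'' directly: it defines the portion $\mathcal{K}^+$ of the optimal trajectory along which the value strictly increases, observes that any reward collected before the maximum step $k_{max}$ is a delta reward (collected once), and asserts that at $k_{max}$ the value can no longer increase, so a local maximum has been reached and a minimum cycle forms there. You instead treat $\pi^*$ as a deterministic self-map of a finite state space: the pigeonhole principle forces the trajectory to become eventually periodic on some cycle $C$; the geometric unrolling $V^*(s_k)=\gamma^p V^*(s_k)$ rules out reward-free cycles (using full connectivity and positive rewards to get $V^*>0$ everywhere); and the paper's earlier theorem on once-versus-infinite collection then identifies $C$ with a minimum cycle whose origin is a local maximum. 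Your argument is tighter on two points the paper leaves implicit: \emph{why} the trajectory must settle at all (the paper tacitly assumes a maximum exists along the path, which again needs finiteness of $S$), and why the settling cycle cannot avoid every reward. What the paper's version buys in exchange is the structural corollary it exploits immediately afterward: because rewards collected strictly before $k_{max}$ lie on the strictly increasing portion of the path, delta peaks can never be local maxima and only baseline or combined peaks can be --- a fact your pigeonhole route does not surface directly, since it says nothing about the transient portion of the trajectory. One small caveat in your final step: the cited theorem only places the infinitely-collected rewards of $C$ \emph{within} a minimum cycle of a local maximum; your closing remark that the limiting traversal \emph{is} that minimum cycle is the right resolution, but it leans on the optimality of $\pi^*$ (an optimal policy cannot cycle through the reward states on anything longer than the minimum cycle), which is worth stating explicitly.
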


\begin{proof}
From \cite{bertramExact}, it was shown that for a given policy all rewards are collected either once or infinitely, and that if a reward is collected infinitely, it is part of a minimum cycle that is a local maximum of the value function.

If we consider an initial state $s_i$, a set of rewards $R = \{ r_1, \cdots, r_N \}$, and the optimal policy $\pi^*$, we define the path taken through the state space by following the optimal policy as $\mathcal{K} = \{ s^{(1)}, s^{(2)}, \cdots,  s^{(k)} \}$, where $k$ represents the $k$-th step through the state space.  Note that for a continuous MDP, this path continues forever.  Let us denote the portion of this infinite path which leads to its maximum value as $\mathcal{K}^+ \subset \mathcal{K} | V( \mathcal{K}(i) ) < V( \mathcal{K}(j) ), \forall i \in \{ 1, \cdots, k \}, \forall j \in \{ i+1, \cdots, k \}$, where we label the maximum $i$ that satisfies the condition as $k_{max}$.  At each step $i \in \{ 1, \cdots, k_{max} \}$ of this path we may either collect a reward $r_n \in R$ or no reward.  If $i < k_{max}$, then we know by the definition of $k_{max}$ that $V( \mathcal{K}^+(i) ) < V( \mathcal{K}^+(i+1) )$ and that reward $r_n$ is collected only once (i.e., it is a delta reward).  If $i = k_{max}$, then we know that $V( \mathcal{K}(i+1) ) \leq V( \mathcal{K}(i) )$ and we have reached a local maximum in the value function where a minimum cycle must then form.  We denote the state at which the local maximum occurs as as $s_\mathcal{K}$.

Thus following the optimal policy must necessarily result in a path that leads ultimately to a local maximum in the value function.
\end{proof}

This proof also shows why delta peaks can never be local maximums, and that only baseline peaks and combined peaks can be local maximums.  Conversely, any baseline peak or combined peak that is selected by Bertram \exact or \memless is also a local maximum.  

We now define the concept of a \emph{dominant peak} which, informally, determines the local maximum that the optimal policy will guide an agent to from a given initial state.

\begin{definition}
From an initial state $s_i$, the \emph{dominant peak} is the peak located at $s_\mathcal{K}$ where the agent reaches the local maximum $s_\mathcal{K}$ along the optimal path $\mathcal{K}^+$ by following the optimal policy $\pi^*$.
\end{definition}

In the case of the two or more peaks that all have equal value at a state $s_i$, they are said to be \emph{co-dominant}.  The optimal policy at these points depends on how the policy extraction algorithm handles the case where multiple actions all lead to states with the same value.  Some implementations may deterministically choose, say, the lowest numbered action, while others may select an action randomly among multiple such actions.  Note that we only address a deterministic implementation in this paper.

By this definition, we know that we need not consider any delta peaks, as they are by definition collected once and cannot form a local maximum.  Given that we know the set of rewards $R$, and can determine the baseline peaks $\mathcal{B}$ and combined peaks $\Gamma$, how do we determine which of these candidates are the dominant peak at a given state $s_i$?

Recall the notation from \cite{bertramExact} of the propagation operator $\mathcal{P}$ which calculates the value function from a given peak.  The formal notation for a baseline peak's value function is
\begin{equation}
  \mathcal{P}_{\mathcal{B}^b}(s) = \gamma^{\delta(s,s_b)} \times \frac{r_b}{1 - \gamma^{\phi(s_b)}},
\label{eq:baseline}
\end{equation}
where $s_b$ is the state at which reward $r_b$ is collected, $\delta(s,s_b)$ is the distance from state $s$ to state $s_b$, and $\phi(s_b)$ is the minimum cycle distance for the MDP.

The formal notation for a combined peak's value function is defined as:
\begin{equation}
\mathcal{P}_{\Gamma^{p,s}}(s) = \mathcal{P}_{\mathcal{B}^p}(s) + \mathcal{P}_{\mathcal{B}^s}(s)
\label{eq:combined}
\end{equation}
where $s_p$ is the state at which primary reward $r_p$ is collected and $s_s$ is the state at which secondary reward $r_s$ is collected.

To evaluate the discounted future reward of a peak from a state $s_i$, we simply evaluate these value function definitions at $s_i$.

From \cite{bertramExact}, we know that the value function formed by any subset $M \in \mathcal{M}$ of peaks lies within $V^\mathcal{M}$ and that the value function $V^M \in V^\mathcal{M}$ formed by the peaks is determined by:
\begin{equation}
V^M(s) = \max_{M_i \in M} M_i(s), \forall s \in S
\end{equation}

Thus, at a given state $s_i \in S$, the value at the state is the maximum value of all the value functions within $M$ evaluated at state $s_i$.  Let us denote the set of peaks that form the value functions in $M$ as $P$.  Let us denote the peak with the maximum value at $s_i$ as $P_{max}$ and its value function as $M_{max}$, and let us define the subset of peaks which does not contain $P_{max}$ as $P_{sub} = P \setminus P_{max}$ and the corresponding subset of value functions as $M_{sub} = M \setminus M_{max}$.

Now let us consider any subset of the peaks $P$ that still contains the peak $P_{max}$, $P_{equiv} = \{ P_{max}, P_{sub}' \}$ where $P_{sub}' \subset P_{sub}$ and the corresponding value functions $M_{equiv} = \{ M_{max}, M_{sub}' \}$ where $M_{sub}' \subset M_{sub}$.  We note that the value of $M_{equiv}$ evaluated at $s$ remains the same as $M_{max}$ evaluated at $s$.  In fact, from the perspective of the agent at state $s$, the value function would remain the same even if the peaks with value functions in $P_{sub}$ were not present.  Thus, we say that $P_{max}$ \emph{dominates} the peaks in $P_{sub}$ at $s_i$ or that $P_{max}$ is the \emph{dominant peak} at $s_i$.

\begin{figure}[H]
\centering
\begin{tikzpicture}[decoration={markings, mark=at position 1.0 with {\arrow{>}};}]
      \tkzDefPoint(2,.05){A}
      \tkzDefPoint(2,.3){B}
      \tkzLabelPoint[below](A){$s_i$}
      
      \tkzDefPoint(5,.05){A}
      \tkzDefPoint(5,.3){B}
      \tkzLabelPoint[below](A){$s_j$}
      
      \tkzDefPoint(9,.05){A}
      \tkzDefPoint(9,.3){B}
      \tkzLabelPoint[below](A){$s_k$}
      
      \tkzDefPoint(4,.00){A}
      \tkzDefPoint(4,1.3){B}
      \tkzDrawSegments[postaction={decorate}](A,B)
      \tkzLabelPoint[above right](B){$r_b$}
      \tkzLabelPoint[below](A){$s_b$}
      
      \tkzDefPoint(7,.00){A}
      \tkzDefPoint(7,1.0){B}
      \tkzDrawSegments[postaction={decorate}](A,B)
      \tkzLabelPoint[above right](B){$r_p$}
      \tkzLabelPoint[below](A){$s_p$}
      
      \tkzDefPoint(8,.00){A}
      \tkzDefPoint(8,0.7){B}
      \tkzDrawSegments[postaction={decorate}](A,B)
      \tkzLabelPoint[above right](B){$r_s$}
      \tkzLabelPoint[below](A){$s_s$}

      \draw[->] (0 ,0) -- (10,0) node[right] {$state$};
      \draw[->] (0 ,0) -- (0,3) node[above] {$value$};
      \draw[thick,domain=0:10,samples=1000,variable=\x,blue] plot ({\x},{3*0.9^(abs(\x-4))});
      \draw[thick,domain=0:10,samples=1000,variable=\x,red] plot ({\x},{1.5*0.9^(abs(\x-7))+1*0.9^(abs(\x-8))});
      
      \draw[thick,densely dotted,domain=0:3,smooth,variable=\y,black]  plot (2,{\y});
      \draw[thick,densely dotted,domain=0:3,smooth,variable=\y,black]  plot (5,{\y});
      \draw[thick,densely dotted,domain=0:3,smooth,variable=\y,black]  plot (9,{\y});
      
      \tkzLabelPoint[above right](3.6,3  ){$\mathcal{B}^b$}
      \tkzLabelPoint[above right](6.6,2.5){$\Gamma^{p,s}$}
      
\end{tikzpicture}
\caption{Illustration of dominant peak.  At state $s_i$ and $s_j$, the peak $\mathcal{B}^b$ dominates $\Gamma^{p,s}$.  At state $s_k$, the peak  $\Gamma^{p,s}$ dominates $\mathcal{B}^b$.}
\end{figure}
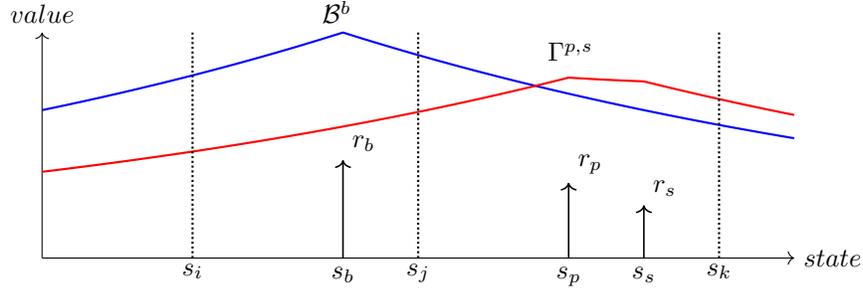

\begin{theorem}
If a peak $P_{max} \in P$ located at state $s_p$ is dominant at $s_i$, then the agent will reach state $s_p$ where the local maximum formed by the dominant peak.
\end{theorem}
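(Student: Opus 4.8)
The plan is to show that following $\pi^*$ from $s_i$ is a greedy hill-climb on $V^M$ that is trapped inside the dominance region of $P_{max}$ and is funneled monotonically to its summit $s_p$. First I would invoke the deterministic Bellman optimality equation together with $V^M(s) = \max_{M_i \in M} M_i(s)$ to record that $\pi^*$ at any state moves the agent to the neighbouring state of greatest value, and that by the preceding theorem this trajectory is guaranteed to terminate at some local maximum $s_\mathcal{K}$. It therefore suffices to identify $s_\mathcal{K}$ with $s_p$, i.e.\ to show that a greedy ascent begun in the basin of $P_{max}$ cannot be diverted to any competing peak.

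The heart of the argument is a one-step domination-preservation estimate. Since each neighbour of $s_i$ differs in distance to any fixed reward source by exactly one, the triangle inequality gives $\delta(s',s_j) \ge \delta(s_i,s_j) - 1$ for every neighbour $s'$; because $0 < \gamma < 1$, Equation~\eqref{eq:baseline} then yields $M_j(s') \le \gamma^{-1} M_j(s_i)$ for every baseline peak, and the same bound holds termwise for combined peaks through Equation~\eqref{eq:combined}. Taking the maximum over $j$ shows $V^M(s') \le \gamma^{-1} M_{max}(s_i)$ for \emph{every} neighbour of $s_i$. On the other hand, the neighbour $s^\ast$ lying one step along a geodesic toward $s_p$ satisfies $M_{max}(s^\ast) = \gamma^{-1} M_{max}(s_i)$, so it attains this upper bound exactly. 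Hence $s^\ast$ is a value-maximising neighbour and, crucially, $V^M(s^\ast) = M_{max}(s^\ast)$, so $P_{max}$ still dominates at $s^\ast$ while $\delta(\cdot,s_p)$ has strictly decreased by one.

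I would then iterate this step: because domination is preserved and the distance to $s_p$ drops by one at each greedy move, an induction on $\delta(s_i,s_p)$ shows that $\pi^*$ traces a geodesic and reaches $s_p$ after exactly $\delta(s_i,s_p)$ steps, at which point $M_{max}$ — and therefore $V^M$ — attains its peak value $r_p/(1-\gamma^{\phi(s_p)})$. By the theorem on minimum cycles quoted earlier, $s_p$ is then a local maximum at which the minimum cycle forms, which is precisely the local maximum formed by the dominant peak and hence the $s_\mathcal{K}$ that the optimal path was guaranteed to reach.

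The hard part will be the combined-peak case together with co-dominant boundaries. When $P_{max}$ is a combined peak $\Gamma^{p,s}$ its value function is a sum of two decaying terms, so a single geodesic step need not decrease the distance to \emph{both} reward sources at once, and the clean equality $M_{max}(s^\ast) = \gamma^{-1} M_{max}(s_i)$ may fail; there one must argue instead that greedy ascent on the sum still converges to the joint maximum defining the minimum cycle. I would handle this by replacing ``geodesic toward $s_p$'' with ``geodesic toward the summit of $\Gamma^{p,s}$'' and re-running the upper-bound estimate, which survives termwise. Finally, I would explicitly exclude exact ties (co-dominance), since — as already noted in the discussion of the policy-extraction rule — the agent's destination in that case is governed by the deterministic tie-breaking of the implementation rather than by $V^M$ alone.
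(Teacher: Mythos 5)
Your proposal follows essentially the same route as the paper's proof: a one-step dominance-preservation argument driven by the fact that no competing peak's value can grow by more than a factor of $\gamma^{-1}$ per step, iterated by induction until the path terminates at the local maximum formed by $P_{max}$; your triangle-inequality estimate simply collapses the paper's three explicit cases (distance to a competing peak increasing, unchanged, or decreasing) into a single bound, and additionally justifies why the greedy step heads toward $s_p$, which the paper's proof implicitly assumes. The combined-peak difficulty you flag as the hard part is genuine, but the paper's own proof glosses over it in exactly the same way, treating $P_{max}$ as though it had a single well-defined distance, which is literally true only for baseline peaks.
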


\begin{proof}
Recall our definition of the optimal path $\mathcal{K}^+$ which describes the "hill climb" that is performed by following the optimal policy from state $s_i$.  

Let us denote the dominant peak at $s_i$ as $P_{max}^i$ and the corresponding value at $s_i$ as $V_{max}^i$.  Let us denote the value at $s_i$ of any other peak $p_{sub} \in P_{sub}$ as $V_{sub}^i$.

Let us consider what happens as we follow $\mathcal{K}^+$ when we take one step from $s_i$ to a next state $s_j$ and decrease the distance to $P_{max}$ by 1, we can say for certain that that the value of our peak $V_{max}$ at $s_j$ will increase compared to the value at $s_i$ due to the geometric progression of the discount factor:

\begin{equation}
\begin{split}
V_{max}( s_i ) &= \gamma \times V_{max}( s_j ) \\
V_{max}( s_j ) &= \frac{1}{\gamma} \times V_{max}( s_i ) 
\end{split}
\end{equation}

When we consider the change in the value of the other peak $p$ at $s_j$, we have three cases to consider.  The step from $s_i$ to $s_j$ may:

\begin{enumerate}
\item cause the distance to $p$ to increase by 1.  
In this case, $V_{sub}^i > V_{sub}^j$, and since $V_{max}^j > V_{max}^i$ and $V_{max}^i > V_{sub}^i$, then $V_{max}^j > V_{sub}^j$.  Therefore our dominant peak remains dominant at $s_j$.
\item cause the distance to $p$ to stay the same.  
In this case, $V_{sub}^i = V_{sub}^j$, and since $V_{max}^j > V_{max}^i$ and $V_{max}^i > V_{sub}^i$, then $V_{max}^j > V_{sub}^j$.  Therefore our dominant peak remains dominant at $s_j$.
\item cause the distance to $p$ to decrease by 1.
In this case, $V_{sub}^j > V_{sub}^i$, and in fact $V_{sub}^j = \frac{1}{\gamma} \times V_{sub}^i$.  We know that $V_{max}^i > V_{sub}^i$, so therefore: 
\begin{equation}
\begin{split}
\frac{1}{\gamma} \times V_{max}^i &> \frac{1}{\gamma} \times V_{sub}^i \\
 V_{max}^j  &> V_{sub}^j
\end{split}
\end{equation}
Therefore, our dominant peak remains dominant at $s_j$.
\end{enumerate}

By induction, this continues until we reach the end of $\mathcal{K}^+$, which we defined as the maximum of $\mathcal{K}$, where the local maximum lies and the minimum cycle occurs.

Therefore, we have proven that if a peak $P_{max}$ is dominant at initial state $s_i$, $\mathcal{K}^+$ will terminate at the local maximum formed by peak $P_{max}$.
\end{proof}

From this result, we have shown that from a given state $s_i$, we can determine the resulting local maximum we will be drawn to during the "hill climb" when following the optimal policy.  

If desired, we can therefore iterate over every state in the state space and determine the dominant peak, and from this information construct a map of the state space that shows the regions of the state space that are attracted to each peak.  We will describe this region as the \emph{region of dominance} for the corresponding dominant peak, and a state is said to lie within a \emph{dominated region} of a peak.

\newcommand\Rect[3]{
\filldraw[fill=#3!20!white, draw=black] (#1,#2) rectangle (#1+1,#2+1);
}

\newcommand\RArrow[2]{
  \draw[-{Triangle[length=4pt]}, line width=0.5mm]  (#1+.8,#2+.5) -- (#1+1+.2,#2+.5);
}

\newcommand\UArrow[2]{
  \draw[-{Triangle[length=4pt]}, line width=0.5mm]  (#1+.5,#2+.8) -- (#1+.5,#2+1+.2);
}

\begin{figure}[H]
\centering
\begin{tikzpicture}[node distance = 4cm, auto]
 \foreach \x in {0,...,5}
    \foreach \y in {0,...,5} 
        \filldraw[fill=white, draw=black] (\x,\y) rectangle (\x+1,\y+1);

\foreach \x in {0,...,5}
   \foreach \y in {0,...,5} 
      \Rect{\x}{\y}{green};
      
\foreach \x in {0,...,3}
   \foreach \y in {3,...,5} 
      \Rect{\x}{\y}{red};

\Rect{0}{2}{blue}
\Rect{0}{1}{blue}
\Rect{0}{0}{blue}
\Rect{1}{2}{blue}
\Rect{1}{1}{blue}

\Rect{3}{3}{green}

\node (r) at (4.5,1.5) {\textbf{$r_a$}};
\node (r) at (1.5,4.5) {\textbf{$r_b$}};
\node (r) at (0.5,2.5) {\textbf{$r_c$}};

\end{tikzpicture}
  \caption{Illustration of a map showing the the dominant peak for each state in the state space.  The red region shows the region of dominance for $r_b$, the blue region shows the region of dominance for $r_c$, and the green region shows the region of dominance for $r_a$.}
  \label{fig:rewards}
\end{figure}
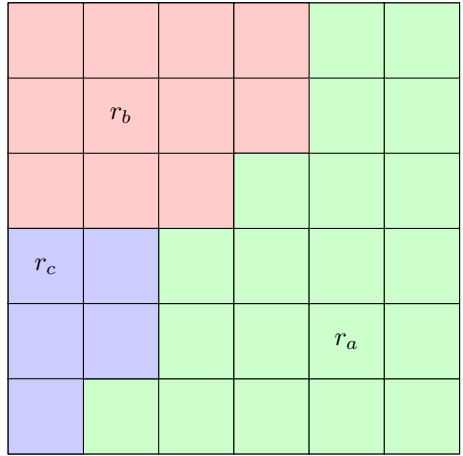

\subsection{Identifying Collected Rewards}

Intuitively, we can see that it is only possible to collect rewards that are in the dominated region that the initial state lies within.  However, we can do better and determine exactly which rewards will and will not be collected from a given initial state.  

\begin{theorem}
Given the dominant peak at a state $s_i$ with a value at that state of $V_{dom}$, any delta peak with a value at $s_i$ of $V_{\Delta} > V_{dom}$ will be collected.  Conversely, any delta peak with a value at $s_i$ of $V_{\Delta} < V_{dom}$ will not be collected.
\end{theorem}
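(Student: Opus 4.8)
The plan is to reduce the statement to a claim about the hill-climb path and then reuse the step-wise dominance argument already established for the preceding dominance theorem. From that theorem and its proof, following $\pi^*$ from $s_i$ produces the optimal path $\mathcal{K}^+$, a strictly value-increasing climb terminating at the dominant peak's local maximum, which I will denote $s_{dom}$; moreover, by the earlier once-or-infinitely theorem of \cite{bertramExact}, a delta reward $r_d$ located at $s_d$ is collected at most once and only ``along the way.'' I therefore first reduce the statement to the claim that $r_d$ is collected if and only if $\mathcal{K}^+$ passes through $s_d$. The essential observation is that a delta peak's value function has the same geometric form in the distance $\delta(\cdot, s_d)$ as a baseline peak, so the per-step behavior used in that earlier proof---value scaled by $1/\gamma$ when a step decreases the distance to a peak, left unchanged when the distance is unchanged, and scaled by $\gamma$ when the distance increases---applies verbatim to the delta peak. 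I would thus treat the delta peak exactly as one of the subordinate peaks $p_{sub}$ in that argument.

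For the second (contrapositive) clause, suppose $V_\Delta < V_{dom}$ at $s_i$. Here the delta peak plays the role of a dominated peak $p_{sub}$ and the dominant peak plays the role of $P_{max}$, so I would invoke the three-case induction from the dominance proof directly: at each step of $\mathcal{K}^+$ toward $s_{dom}$ the dominant peak's value grows by $1/\gamma$, while the delta peak's value grows by at most $1/\gamma$, stays equal, or shrinks; in every case the strict inequality $V_{dom} > V_\Delta$ is preserved. By induction it holds at every state of $\mathcal{K}^+$, so the delta peak never becomes the governing term of the value function along the path, the climb never deviates toward $s_d$, and $\mathcal{K}^+$ never visits $s_d$. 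Hence $r_d$ is not collected.

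For the first clause, suppose $V_\Delta > V_{dom}$ at $s_i$. Now the delta peak is the governing (maximal) term of the value function at $s_i$, so the largest value increase available to $\pi^*$ at $s_i$ is the step that decreases $\delta(\cdot, s_d)$, and the climb initially ascends the delta peak toward $s_d$. I would show this continues until $s_d$ is reached and $r_d$ is collected. Crucially, by the once-or-infinitely theorem a delta reward lies in no minimum cycle and is therefore not itself a local maximum, so $\mathcal{K}^+$ cannot terminate at $s_d$; once $r_d$ has been collected its one-time contribution is spent, the dominant peak re-emerges as the governing term, and by the dominance theorem the remainder of the climb proceeds to $s_{dom}$. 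Thus $r_d$ is collected exactly once.

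The main obstacle is the crossover in the first clause: making rigorous the transition at $s_d$ from ``delta peak governs'' to ``dominant peak governs,'' i.e.\ proving that the climb actually departs $s_d$ toward $s_{dom}$ rather than stalling. This is where the structural hypotheses enter---full connectivity supplies the triangle inequality $\delta(s_i,s_d)+\delta(s_d,s_{dom}) \ge \delta(s_i,s_{dom})$ that keeps $s_{dom}$ reachable with a well-defined gain after the detour, while the once-or-infinitely theorem forbids $s_d$ from being a terminal local maximum. A secondary point to handle is the boundary case $V_\Delta = V_{dom}$, which is precisely the co-dominance tie discussed after the dominant-peak definition and is resolved only up to the implementation's deterministic tie-breaking; the strict inequalities in the statement are exactly what let the induction above avoid it.
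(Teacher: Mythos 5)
Your proposal is correct in substance but takes a genuinely different route from the paper. The paper's proof is entirely static: it invokes the result from \cite{bertramExact} that $V^*$ lies in $V^{\mathcal{M}}$ and equals the element-wise maximum of the peak value functions, observes that under the hypothesis $V_\Delta > V_{dom}$ the delta peak is the governing term of $V^*$ at $s_i$, and then appeals directly to the \emph{definition} of a delta peak (a reward collected exactly once on the way to a baseline or combined peak) to conclude that the reward is collected; the converse clause is handled symmetrically, with no trajectory analysis at all. You instead re-run the dynamical induction from the dominance theorem along $\mathcal{K}^+$: for the non-collection clause you treat the delta peak as a subordinate peak whose strict domination by the dominant peak is preserved step by step (and since $r_d > 0$ means the delta peak would necessarily be the governing term at $s_d$ itself, the path can never pass through $s_d$ --- a small link worth stating explicitly); for the collection clause you let the delta peak play the governing role until $s_d$ is reached, then hand control back to the dominant peak. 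What each approach buys: the paper's argument is much shorter, but it pushes all of the trajectory content into the definition of a delta peak and leaves unargued exactly the point you isolate --- why being the maximal term at $s_i$ forces the optimal path to actually visit $s_d$ and then depart toward $s_{dom}$ rather than stall. Your version makes that content explicit, proves the negative direction more rigorously than the paper does, and your sketched resolution of the crossover (a delta peak cannot be a local maximum by the once-or-infinitely theorem, so the climb cannot terminate at $s_d$, after which the dominant peak is the maximal remaining term) is precisely what the paper's definitional appeal hides. Your remark on the $V_\Delta = V_{dom}$ boundary case also matches the paper's co-dominance discussion, so nothing in your treatment contradicts the paper; it simply does the work the paper delegates to \cite{bertramExact}.
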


\begin{figure}[H]
\centering
\begin{tikzpicture}[decoration={markings, mark=at position 1.0 with {\arrow{>}};}]
      \tkzDefPoint(2,.05){A}
      \tkzDefPoint(2,.3){B}
      \tkzDrawLines[postaction={decorate}](A,B)
      \tkzLabelPoint[above right](B){$r_d$}
      \tkzLabelPoint[below](A){$s_d$}
      
      \tkzDefPoint(7,.00){A}
      \tkzDefPoint(7,1.3){B}
      \tkzDrawSegments[postaction={decorate}](A,B)
      \tkzLabelPoint[above right](B){$r_i$}
      \tkzLabelPoint[below](A){$s_i$}
      
      \tkzDefPoint(4,.00){A}
      \tkzDefPoint(4,0.2){B}
      \tkzDrawSegments[postaction={decorate}](A,B)
      \tkzLabelPoint[above right](B){$r_k$}
      \tkzLabelPoint[below](A){$s_k$}
      
      \draw[->] (0 ,0) -- (10,0) node[right] {$state$};
      \draw[->] (0 ,0) -- (0,3) node[above] {$value$};
      \draw[thick,domain=0:10,samples=1000,variable=\x,blue] plot ({\x},{2.8*0.9^(abs(\x-7))});
      \draw[thick,domain=0:10,samples=1000,variable=\x,red] plot ({\x},{1.2*0.9^(abs(\x-4))});
      \draw[thick,domain=0:10,samples=1000,variable=\x,green] plot ({\x},{2.1*0.9^(abs(\x-2))});
      \draw[thick,densely dotted,domain=1.65:2.10,smooth,variable=\y,black]  plot (2,{\y});
      \tkzLabelPoint[above right](6.7,3  ){$\mathcal{B}^i$}
      \tkzLabelPoint[above right](3.7,1.25){$\Delta^{k}$}
      \tkzLabelPoint[above right](1.7,2.45){$\Delta^d$}
      
\end{tikzpicture}
\caption{Illustration of baseline peak (blue), a delta peak (red) that will not be collected, and a delta peak (green) that will be collected.}
\end{figure}
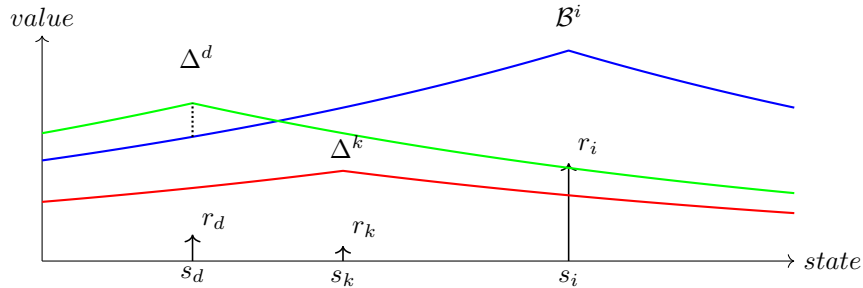

\begin{proof}
In \cite{bertramExact}, it was shown that the optimal value function $V^* \in V^\mathcal{M}$ and that $V^*$ is equal to the element-wise maximum of $V^\mathcal{M}$.  Let us denote as $P^*$ the combination of peaks and the corresponding value functions $M^* \in \mathcal{M}$ that result in the optimal value function $V^*$.  Let us assume that at a state $s_i$ there exists a dominant peak $P_{dom}$ with a value at $s_i$ of $V_{dom}$, and further assume that a delta peak $P_{\Delta}$ with value at $s_i$ of $V_{\Delta}$ such that $V_{\Delta} > V_{dom}$, and finally that if there are more delta peaks, the delta peak $P_{\Delta}$ is the one which has maximum value among them at $s_i$.

Then, it is clear from the definition of $V^*$ that $V_{dom}$ is not the maximum value at $s_i$ and that it is in fact $V_{\Delta}$.  This then implies that the delta peak $P_{\Delta}$ is selected and by definition is collected once along the path to the dominant peak, which may cause a divergence of the optimal path from the path that would result in following the dominant peak directly.  This represents a case where the cost of diverting away from the direct path to the dominant peak is overcome by the benefit of obtaining the reward from the delta peak.

Similarly, if $V_{\Delta} < V_{dom}$, then $V_{dom}$ is the maximum value at $s_i$ and the delta peak will not be collected along the optimal path.  This represents a case where the cost of diverting away from the direct path to the dominant peak is not justified by the collection of the reward.
\end{proof}

With this proof, we now have a way to identify which rewards will be collected.  Given the list of optimal peaks from the Bertram \memless algorithm and an initial state $s_i$, the rewards associated with the peaks listed below are collected as follows.  We denote this set of peaks that are collected as $P^c$ and the corresponding set of rewards $R^c$ as the \emph{collected rewards}.  No other rewards are collected when following the optimal policy from state $s_i$.

\begin{enumerate}
\item the dominant peak (which is either a baseline peak or a combined peak)
\item any delta peaks whose value $V_{\Delta} > V_{dom}$ at state $s_i$.
\end{enumerate}

\subsection{Relative Contribution}

We define the \emph{relative contribution} of a collected peak $p \in P^c$ through the following procedure.

\begin{definition}
Given the set of collected peaks $P^c$ with a length of $k$, we must order them in decreasing order by their value as evaluated at state $s_i$, which we will defined as the list $P^{ord}$ also with length $k$.  The maximum value of this list would then be the first element $P^{ord}(0)$ which is equivalent to $V^*(s_i)$.  We then append to this ordered list a trailing value of 0, denoted as $P^{prepared}$ which then has length $k+1$. The difference in value, $\mathcal{D}$, between the peaks is then defined as:

\begin{equation*}
\mathcal{D}_i = P^{prepared}( i ) - P^{prepared}( i + 1 ), \forall i \in \{ 1, \cdots, k \}
\end{equation*}
\end{definition}

The \emph{relative contribution} of the collected peaks is then the ratio $\frac{\mathcal{D}_i}{V^*(s_i)}$, which could then be expressed as a percentage.  This percentage can be used to determine how strongly a given collected reward is influencing the optimal policy at state $s_i$, which provides a deeper understanding of the action and improves the explainability of the Markov Decision Process optimal policy.  

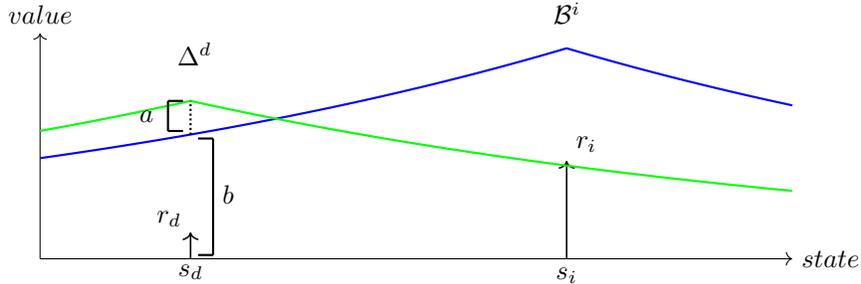
\begin{figure}[H]
\centering
\begin{tikzpicture}[decoration={markings, mark=at position 1.0 with {\arrow{>}};}]
      \tkzDefPoint(2,.05){A}
      \tkzDefPoint(2,.3){B}
      \tkzDrawLines[postaction={decorate}](A,B)
      \tkzLabelPoint[above left](B){$r_d$}
      \tkzLabelPoint[below](A){$s_d$}
      
      \tkzDefPoint(7,.00){A}
      \tkzDefPoint(7,1.3){B}
      \tkzDrawSegments[postaction={decorate}](A,B)
      \tkzLabelPoint[above right](B){$r_i$}
      \tkzLabelPoint[below](A){$s_i$}
      
      \draw[->] (0 ,0) -- (10,0) node[right] {$state$};
      \draw[->] (0 ,0) -- (0,3) node[above] {$value$};
      \draw[thick,domain=0:10,samples=1000,variable=\x,blue] plot ({\x},{2.8*0.9^(abs(\x-7))});
      \draw[thick,domain=0:10,samples=1000,variable=\x,green] plot ({\x},{2.1*0.9^(abs(\x-2))});
      \draw[thick,densely dotted,domain=1.65:2.10,smooth,variable=\y,black]  plot (2,{\y});
      \tkzLabelPoint[above right](6.7,3  ){$\mathcal{B}^i$}
      \tkzLabelPoint[above right](1.7,2.45){$\Delta^d$}
      
      \draw[thick,domain=1.70:2.10,smooth,variable=\y,black]  plot (1.7,{\y});
      \draw[thick,domain=1.7:1.9,smooth,variable=\x,black]  plot ({\x},1.7);
      \draw[thick,domain=1.7:1.9,smooth,variable=\x,black]  plot ({\x},2.1);
      \tkzLabelPoint(1.2,2.1){$a$}
      
      \draw[thick,domain=.05:1.60,smooth,variable=\y,black]  plot (2.3,{\y});
      \draw[thick,domain=2.1:2.3,smooth,variable=\x,black]  plot ({\x},1.6);
      \draw[thick,domain=2.1:2.3,smooth,variable=\x,black]  plot ({\x},.05);
      \tkzLabelPoint(2.3,1.1){$b$}
      
\end{tikzpicture}
\caption{Illustration of baseline peak (blue), and a delta peak (green)  The value at state $s_i$ is $V(s_i) = a + b$, where $b$ is the contribution from the baseline peak and $a$ is the contribution from the delta peak.  The relative contributions are the ratios $\mathcal{D} = \{ \frac{a}{V(s_i)}, \frac{b}{V(s_i)} \}$ from which we can express as a percentage how much each reward source is contributing to the value at the state $s_d$ (or any other state.)}
\end{figure}
\section{Conclusion}

In this paper, we have presented a novel approach to explaining why the optimal policy for a Markov Decision Process selects a specific action, relating the action to the degree in which they are driven by various reward sources.  This reduces the opaqueness of Markov Decision Processes and can be used to analyze the state space to determine which regions of the state space will be attracted to given local maximums of the value function.

This algorithm is based on the research and methods proposed in \cite{bertramExact,bertramMemoryless} and is therefore subject to the same restricted class of Markov Decision Processes.  If the methods can be expanded to work on a more general class of MDPs, then the method described in this paper should also be applicable to this more general class of MDPs.

\bibliographystyle{unsrt}
\bibliography{refs}

\end{document}